\newtheorem{theorem}{Theorem}
\newtheorem{lemma}{Lemma}
\newtheorem{definition}{Definition}
\newtheorem{assumption}{Assumption}
\newtheorem{remark}{Remark}
\DeclareMathOperator*{\argmax}{arg\,max\,}
\DeclareMathOperator{\ee}{\mathbb{E}}			% expected value
\DeclareMathOperator{\prob}{\mathbb{P}}			% probability
\title{
Learning Unknown Markov Decision Processes: \\
A Thompson Sampling Approach
}
\author{
  Yi~Ouyang
%  \thanks{Use footnote for providing further
%    information about author (webpage, alternative
%    address)---\emph{not} for acknowledging funding agencies.} 
    \\
%  Department of Electrical Engineering\\
  University of California, Berkeley\\
%  Los Angeles, CA 90089 \\
  \texttt{ouyangyi@berkeley.edu} \\
  %% examples of more authors
  \And
  Mukul~Gagrani   \\
  University of Southern California\\
%  Los Angeles, CA 90089 \\
  \texttt{mgagrani@usc.edu} \\
  \And
  Ashutosh~Nayyar   \\
  University of Southern California\\
%  Los Angeles, CA 90089 \\
  \texttt{ashutosn@usc.edu} \\
  \And
  Rahul~Jain   \\
  University of Southern California\\
%  Los Angeles, CA 90089 \\
  \texttt{rahul.jain@usc.edu} \\
  %% Coauthor \\
  %% Affiliation \\
  %% Address \\
  %% \texttt{email} \\
  %% \AND
  %% Coauthor \\
  %% Affiliation \\
  %% Address \\
  %% \texttt{email} \\
  %% \And
  %% Coauthor \\
  %% Affiliation \\
  %% Address \\
  %% \texttt{email} \\
  %% \And
  %% Coauthor \\
  %% Affiliation \\
  %% Address \\
  %% \texttt{email} \\
}
\begin{document}
% \nipsfinalcopy is no longer used

\maketitle

\begin{abstract}
We consider the problem of learning an unknown Markov Decision Process (MDP) that is weakly communicating in the infinite horizon setting. We propose a Thompson Sampling-based reinforcement learning algorithm  with dynamic episodes (TSDE). At the beginning of each episode, the algorithm generates a sample from the posterior distribution over the unknown model parameters. It then follows the optimal stationary policy for the sampled model for the rest of the episode. The duration of each episode is dynamically determined by two stopping criteria. The first stopping criterion controls the growth rate of episode length. The second stopping criterion happens when the number of visits to any state-action pair is doubled. We establish $\tilde O(HS\sqrt{AT})$ bounds on expected regret under a Bayesian setting, where $S$ and $A$ are the sizes of the state and action spaces, $T$ is time, and $H$ is the bound of the span. This regret bound matches the best available bound for weakly communicating MDPs. Numerical results show it to perform better than existing algorithms for infinite horizon MDPs.
% and comparable to algorithms that work for a more restricted class.
\end{abstract}

\section{Introduction}

We consider the problem of reinforcement learning by an agent interacting with an environment while trying to minimize the total cost accumulated over time.
The environment is modeled by an infinite horizon Markov Decision Process (MDP) with finite state and action spaces.
When the environment is perfectly known, the agent can determine optimal actions by solving a dynamic program for the MDP \cite{bertsekas}.
In reinforcement learning, however, the agent is uncertain about the true dynamics of the MDP.
A naive approach to an unknown model is the \emph{certainty equivalence principle}. The idea is to estimate the unknown MDP parameters from available information and then choose actions as if the estimates are the true parameters. But it is well-known in adaptive control theory that the certainty equivalence principle may lead to suboptimal performance due to the lack of exploration \cite{kumar2015stochastic}. This issue actually comes from the fundamental exploitation-exploration trade-off: the agent wants to exploit available information to minimize cost, but it also needs to explore the environment to learn system dynamics.

One common way to handle the exploitation-exploration trade-off is to use the \emph{optimism in the face of uncertainty} (OFU) principle \cite{lai1985asymptotically}.
Under this principle, the agent constructs confidence sets for the system parameters at each time, find the optimistic parameters that are associated with the minimum cost, and then selects an action based on the optimistic parameters.  
The optimism procedure encourages exploration for rarely visited states and actions. 
Several optimistic algorithms are proved to possess strong theoretical performance guarantees \cite{burnetas1997optimal,kearns2002near,brafman2002r,bartlett2009regal,jaksch2010near,filippi2010optimism,dann2015sample}.

An alternative way to incentivize exploration is the Thompson Sampling (TS) or Posterior Sampling method.
The idea of TS was first proposed by Thompson in \cite{thompson1933likelihood} for stochastic bandit problems.
It has been applied to MDP environments \cite{strens2000bayesian,osband2013more,fonteneau2013optimistic,gopalan2015thompson,abbasi2015bayesian,osband2016why} where the agent computes the posterior distribution of unknown parameters using observed information and a prior distribution. A TS algorithm generally proceeds in episodes: at the beginning of each episode a set of MDP parameters is randomly sampled from the posterior distribution, then actions are selected based on the sampled model during the episode. 
TS algorithms have the following advantages over optimistic algorithms.
First, TS algorithms can easily incorporate problem structures through the prior distribution.
Second, they are more computationally efficient since a TS algorithm only needs to solve the sampled MDP, while an optimistic algorithm requires solving all MDPs that lie within the confident sets.
Third, empirical studies suggest that TS algorithms outperform optimistic algorithms in bandit problems \cite{scott2010modern,chapelle2011empirical} as well as in MDP environments \cite{osband2013more,abbasi2015bayesian,osband2016why}.

Due to the above advantages, we focus on TS algorithms for the MDP learning problem.
The main challenge in the design of a TS algorithm is the lengths of the episodes. 
For finite horizon MDPs under the episodic setting, the length of each episode can be set as the time horizon \cite{osband2013more}.
When there exists a recurrent state under any stationary policy, the TS algorithm of \cite{gopalan2015thompson} starts a new episode whenever the system enters the recurrent state.
However, the above methods to end an episode can not be applied to MDPs without the special features.
The work of \cite{abbasi2015bayesian} proposed a dynamic episode schedule based on the doubling trick used in \cite{bartlett2009regal}, but a mistake in their proof of regret bound was pointed out by \cite{osband2016posterior}. 
In view of the mistake in \cite{abbasi2015bayesian}, there is no TS algorithm with strong performance guarantees for general MDPs to the best of our knowledge.

We consider the most general subclass of weakly communicating MDPs in which meaningful finite time regret guarantees can be analyzed.
We propose the Thompson Sampling with Dynamic Episodes (TSDE) learning algorithm.
In TSDE, there are two stopping criteria for an episode to end.
The first stopping criterion controls the growth rate of episode length.
The second stopping criterion is the doubling trick similar to the one in \cite{bartlett2009regal,jaksch2010near,filippi2010optimism,dann2015sample,abbasi2015bayesian} that stops when the number of visits to any state-action pair is doubled.
Under a Bayesian framework, we show that the expected regret of TSDE accumulated up to time $T$ is bounded by $\tilde O(HS\sqrt{AT})$ where $\tilde O$ hides logarithmic factors. 
Here $S$ and $A$ are the sizes of the state and action spaces, $T$ is time, and $H$ is the bound of the span.
This regret bound matches the best available bound for weakly communicating MDPs \cite{bartlett2009regal}, and it matches the theoretical lower bound in order of $T$ except for logarithmic factors. We present numerical results that show that TSDE actually outperforms current algorithms with known regret bounds that have the same order in $T$ for a benchmark MDP problem as well as randomly generated MDPs.

\section{Problem Formulation}

\subsection{Preliminaries}

An infinite horizon Markov Decision Process (MDP) is described by $(\mathcal S, \mathcal A, c, \theta)$.
Here $\mathcal S$ is the state space, $\mathcal A$ is the action space, $c: \mathcal S \times \mathcal A \rightarrow [0,1]$\footnote{Since $\mathcal S$ and $\mathcal A$ are finite, we can normalize the cost function to $[0,1]$ without loss of generality.} is the cost function, and $\theta : \mathcal S^2\times\mathcal A\rightarrow [0,1]$ represents the transition probabilities such that
$\theta(s'|s,a) = \prob(s_{t+1} = s' | s_{t} = s, a_t = a)$
where $s_t \in \mathcal S$ and $a_t \in \mathcal A$ are the state and the action at $t=1,2,3\dots$. 
We assume that $\mathcal S$ and $\mathcal A$ are finite spaces with sizes $S\geq 2$ and $A\geq 2$, and the initial state $s_1$ is a known and fixed state.
A stationary policy is a deterministic map $\pi: \mathcal S \rightarrow \mathcal A$ that maps a state to an action.
The average cost per stage of a stationary policy is defined as
\begin{align*}
J_{\pi}(\theta) = \limsup_{T\rightarrow \infty}\frac{1}{T}\ee \Big[\sum_{t=1}^{T} c(s_t,a_t)\Big].
\end{align*}
Here we use $J_{\pi}(\theta)$ to explicitly show the dependency of the average cost on $\theta$.

To have meaningful finite time regret bounds, we consider the subclass of weakly communicating MDPs defined as follows.
\begin{definition}
An MDP is weakly communicating (or weak accessible) if its states can be partitioned into two subsets:
in the first subset all states are transient under every stationary policy, and every two states in the second subset can be reached from each other under some stationary policy.
\end{definition}
From MDP theory \cite{bertsekas}, we know that if the MDP is weakly communicating, the optimal average cost per stage $J(\theta) = \min_{\pi}J_{\pi}(\theta)$ satisfies the Bellman equation
\begin{align}
J(\theta) + v(s,\theta) = \min_{a \in \mathcal A}\Big\{
c(s,a) + \sum_{s' \in \mathcal S}\theta(s'|s,a)v(s',\theta)
\Big\}
\label{eq:DP_inf}
\end{align}
for all $s\in\mathcal S$. The corresponding optimal stationary policy $\pi^*$ is the minimizer of the above optimization given by
\begin{align}
a = \pi^*(s,\theta).
\label{eq:known_control}
\end{align}

Since the cost function $c(s,a) \in [0,1]$, $J(\theta) \in [0,1]$ for all $\theta$. 
If $v$ satisfies the Bellman equation, $v$ plus any constant also satisfies the Bellman equation.
Without loss of generality, let $\min_{s \in \mathcal S} v(s,\theta) = 0$ and define the span of the MDP as 
$sp(\theta) = \max_{s \in \mathcal S} v(s,\theta)$. \footnote{See \cite{bartlett2009regal}for a discussion on the connection of the span with other parameters such as the diameter appearing in the lower bound on regret.}

%\begin{remark}
%It is shown in \cite{bartlett2009regal} that the span $sp(\theta)$ is less than the diameter of the MDP defined in \cite{jaksch2010near}.
%\end{remark}

We define $\Omega_*$ to be the set of all $\theta$ such that the MDP with transition probabilities $\theta$ is weakly communicating, and there exists a number $H$ such that $sp(\theta) \leq H$. We will focus on MDPs with transition probabilities in the set $\Omega_*$.

\subsection{Reinforcement Learning for Weakly Communicating MDPs}

We consider the reinforcement learning problem of an agent interacting with a random weakly communicating MDP $(\mathcal S, \mathcal A, c, \theta_*)$.
We assume that $\mathcal S$, $\mathcal A$ and the cost function $c$ are completely known to the agent.
The actual transition probabilities $\theta_*$ is randomly generated at the beginning before the MDP interacts with the agent. The value of $\theta_*$ is then fixed but unknown to the agent.
The complete knowledge of the cost is typical as in \citep{bartlett2009regal,gopalan2015thompson}. Algorithms can generally be extended to the unknown costs/rewards case at the expense of some constant factor for the regret bound.

At each time $t$, the agent selects an action according to $a_t = \phi_t(h_t)$ where $h_t = (s_1,s_2,\dots,s_t,a_1,a_2,\dots,a_{t-1}) $ is the history of states and actions. The collection $\phi = (\phi_1,\phi_2\dots)$ is called a learning algorithm.
The functions $\phi_t$ allow for the possibility of randomization over actions at each time.

We focus on a Bayesian framework for the unknown parameter $\theta_*$. 
Let $\mu_1$ be the prior distribution for $\theta_*$, i.e., for any set $\Theta$, $\mathbb{P}(\theta_* \in \Theta) = \mu_1(\Theta)$. 
We make the following assumptions on $\mu_1$.

\begin{assumption}
\label{assum:WASP}
The support of the prior distribution $\mu_1$ is a subset of $\Omega_*$. That is, the MDP is weakly communicating and $sp(\theta_*) \leq H$.
\end{assumption}

In this Bayesian framework, we define the expected regret (also called Bayesian regret or Bayes risk) of a learning algorithm $\phi$ up to time $T$ as
\begin{align}
R(T,\phi) = \ee\Big[ \sum_{t=1}^T \Big[ c(s_t,a_t)  - J(\theta_*)\Big] \Big]
\end{align}
where $s_t,a_t, t=1,\dots,T$ are generated by $\phi$ and $J(\theta_*)$ is the optimal per stage cost of the MDP.
The above expectation is with respect to the prior distribution $\mu_1$ for $\theta_*$, the randomness in state transitions, and the randomized algorithm.
The expected regret is an important metric to quantify the performance of a learning algorithm.
%Our proposed TS algorithm achieves a $\tilde O(\sqrt{T})$ bound on the expected regret.

\section{Thompson Sampling with Dynamic Episodes}

In this section, we propose the Thompson Sampling with Dynamic Episodes (TSDE) learning algorithm.
The input of TSDE is the prior distribution $\mu_1$.
At each time $t$, given the history $h_t$, the agent can compute the posterior distribution $\mu_t$ given by
$\mu_t(\Theta) = \mathbb{P}(\theta_* \in \Theta | h_t)$ for any set $\Theta$.
Upon applying the action $a_t$ and observing the new state $s_{t+1}$, the posterior distribution at $t+1$ can be updated according to Bayes' rule as
\begin{align}
\mu_{t+1}(d\theta) =
\frac{\theta(s_{t+1}|s_t,a_t)\mu_t(d\theta)}{\int \theta'(s_{t+1}|s_t,a_t)\mu_t(d\theta')}.
\label{eq:beliefupdate}
\end{align}

Let $N_t(s,a)$ be the number of visits to any state-action pair $(s,a)$ before time $t$. That is,
\begin{align}
N_t(s,a) = |\{\tau<t: (s_\tau,a_\tau) = (s,a)\}|.
\end{align}

With these notations, TSDE is described as follows.
\begin{algorithm}[H]
\caption{Thompson Sampling with Dynamic Episodes (TSDE)}
\begin{algorithmic}
\STATE Input: $\mu_1$ 
\STATE Initialization: $t\leftarrow 1$, $t_k \leftarrow 0$ 
\FOR{ episodes $k=1,2,...$}
	\STATE{$T_{k-1} \leftarrow t - t_{k}$}
	\STATE{$t_{k}  \leftarrow t$}
	\STATE{Generate $\theta_{k} \sim \mu_{t_k}$ and compute $\pi_{k}(\cdot) = \pi^*(\cdot,\theta_k)$ from \eqref{eq:DP_inf}-\eqref{eq:known_control}}
	\WHILE{$t \leq t_k+T_{k-1}$ and $N_t(s,a) \leq 2N_{t_{k}}(s,a)$ for all $(s,a) \in \mathcal S\times \mathcal A$}
	\STATE{Apply action $a_t = \pi_k(s_t)$}
	\STATE{Observe new state $s_{t+1}$}
	\STATE{Update $\mu_{t+1}$ according to \eqref{eq:beliefupdate}}
	\STATE{$t  \leftarrow t+1$}
	\ENDWHILE
\ENDFOR
\end{algorithmic}
\end{algorithm}

The TSDE algorithm operates in episodes. 
Let $t_k$ be start time of the $k$th episode and $T_k = t_{k+1}-t_{k}$ be the length of the episode with the convention $T_0 = 1$.
From the description of the algorithm, $t_1 = 1$ and $t_{k+1}, k\geq 1,$ is given by
\begin{align}
t_{k+1} = \min\{ & t>t_{k}:\quad 
t > t_{k} + T_{k-1} 
 \text{ or }
N_t(s,a) > 2N_{t_{k}}(s,a) \text{ for some }(s,a) 
\}.
\label{eq:tk}
\end{align}

At the beginning of episode $k$, a parameter $\theta_k$ is sampled from the posterior distribution $\mu_{t_k}$. During each episode $k$, actions are generated from the optimal stationary policy $\pi_k$ for the sampled parameter $\theta_k$. 
One important feature of TSDE is that its episode lengths are not fixed. The length $T_k$ of each episode is dynamically determined according to two stopping criteria: (i) $t > t_k+T_{k-1}$, and (ii) 
$N_t(s,a) > 2N_{t_{k}}(s,a)$ for some state-action pair $(s,a)$.
The first stopping criterion provides that the episode length grows at a linear rate without triggering the second criterion.
%does not increase too fast. In particular, the length $T_k$ of episode $k$ is no more than $T_{k-1}+1$.
The second stopping criterion ensures that the number of visits to any state-action pair $(s,a)$ during an episode should not be more than the number visits to the pair before this episode.

\begin{remark}
Note that TSDE only requires the knowledge of $\mathcal S$, $\mathcal A$, $c$, and the prior distribution $\mu_1$.
TSDE can operate without the knowledge of time horizon $T$, the bound $H$ on span used in \cite{bartlett2009regal}, and any knowledge about the actual $\theta_*$ such as the recurrent state needed in \cite{gopalan2015thompson}.
\end{remark}

%\begin{remark}
%The knowledge of a prior distribution allows the agent to incorporate known problem structures \cite{osband2013more,gopalan2015thompson,abbasi2015bayesian}.
%For example, if the transition probabilities belong to a family of distributions parameterized by an unknown parameter $\lambda_*$, 
%TSDE can take advantage of this structure and operate using the prior distribution of $\lambda_*$.
%\end{remark}

\subsection{Main Result}

%Our main result is the performance of TSDE stated below.

\begin{theorem}
\label{thm:regretbound}
Under Assumption \ref{assum:WASP},
%If the true parameter $\theta_*$ is generated from the prior distribution $\mu_1$ whose support is a subset of $\Omega_*$, then the expected regret of TSDE satisfies
\begin{align*}
R(T,\text{TSDE})  \leq  (H+1)\sqrt{ 2 SA T\log(T) } + 49 HS\sqrt{AT\log(AT)}.
\end{align*}
\end{theorem}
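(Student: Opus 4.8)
The plan is to decompose the regret episode-by-episode through the Bellman equation \eqref{eq:DP_inf} for the \emph{sampled} model and to control the resulting pieces separately. Let $K_T$ be the number of episodes begun by time $T$ and let $\mathcal F_t=\sigma(h_t)$. During episode $k$ the action is $a_t=\pi_k(s_t)$, so \eqref{eq:DP_inf} gives $c(s_t,a_t)=J(\theta_k)+v(s_t,\theta_k)-\sum_{s'}\theta_k(s'|s_t,a_t)v(s',\theta_k)$. By adding and subtracting $v(s_{t+1},\theta_k)$ and $\sum_{s'}\theta_*(s'|s_t,a_t)v(s',\theta_k)$, I would split $c(s_t,a_t)-J(\theta_*)$ into four parts: the optimal-cost gap $J(\theta_k)-J(\theta_*)$; the telescoping difference $v(s_t,\theta_k)-v(s_{t+1},\theta_k)$; the martingale difference $v(s_{t+1},\theta_k)-\sum_{s'}\theta_*(s'|s_t,a_t)v(s',\theta_k)$; and the model-mismatch term $\sum_{s'}\big(\theta_*-\theta_k\big)(s'|s_t,a_t)\,v(s',\theta_k)$. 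Summing over $t=1,\dots,T$ writes $R(T,\mathrm{TSDE})=R_0+R_1+R_2$ plus the martingale part, whose expectation vanishes because each summand has zero conditional mean given $\mathcal F_t$ and there are exactly $T$ of them.

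Two facts drive the argument. First, a bound on $K_T$: the first stopping rule forces $T_k\le T_{k-1}+1$, so along any maximal block of episodes ending by the first criterion the lengths rise by one each step, while the second criterion fires at most $O(SA\log T)$ times since each $N_t(s,a)$ can double at most $\log_2 T$ times. Grouping episodes into such blocks and applying Cauchy--Schwarz to $T=\sum_k T_k$ would yield $K_T\le\sqrt{2SAT\log T}$. Second, the posterior-sampling identity: conditioned on $\mathcal F_{t_k}$, the sample $\theta_k$ and the truth $\theta_*$ are identically distributed (both $\sim\mu_{t_k}$), so $\ee[g(\theta_k)\mid\mathcal F_{t_k}]=\ee[g(\theta_*)\mid\mathcal F_{t_k}]$ for any $g$, and in particular for any $\mathcal F_{t_k}$-measurable weight multiplying it.

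These two facts dispatch $R_0$ and $R_1$. For $R_0=\ee\big[\sum_{k\le K_T}T_k(J(\theta_k)-J(\theta_*))\big]$, since $J\ge0$ and $T_k\le T_{k-1}+1$ I would bound $T_kJ(\theta_k)\le(T_{k-1}+1)J(\theta_k)$; as $T_{k-1}+1$ and the event $\{t_k\le T\}$ are $\mathcal F_{t_k}$-measurable, posterior sampling replaces $J(\theta_k)$ by $J(\theta_*)$ in expectation, and telescoping with $J(\theta_*)\le1$ leaves $R_0\le\ee[K_T]+1$. The telescoping piece $R_1=\ee\big[\sum_{k\le K_T}(v(s_{t_k},\theta_k)-v(s_{t_{k+1}},\theta_k))\big]$ is at most $H\,\ee[K_T]$ because $0\le v(\cdot,\theta_k)\le sp(\theta_k)\le H$. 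Hence $R_0+R_1\le(H+1)\ee[K_T]+1\le(H+1)\sqrt{2SAT\log T}+1$, the first term of the claimed bound.

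The crux is $R_2$. Hölder's inequality gives $\big|\sum_{s'}(\theta_*-\theta_k)(s'|s_t,a_t)v(s',\theta_k)\big|\le H\,\|\theta_*(\cdot|s_t,a_t)-\theta_k(\cdot|s_t,a_t)\|_1$, and inserting the empirical kernel $\hat\theta_{t_k}$ formed from the counts at the episode start splits this into $\|\theta_*-\hat\theta_{t_k}\|_1+\|\theta_k-\hat\theta_{t_k}\|_1$. A Weissman-type $L_1$ concentration bound confines $\theta_*$ to a ball of radius $\sim\sqrt{S\log(AT)/\max(1,N_{t_k}(s,a))}$ about $\hat\theta_{t_k}$ with high probability, and posterior sampling transfers the same control to $\theta_k$ in expectation. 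The doubling criterion guarantees $N_{t_k}(s_t,a_t)\ge\tfrac12 N_t(s_t,a_t)$ throughout episode $k$, so each radius can be rewritten in terms of the live count $N_t(s_t,a_t)$; then $\sum_{t=1}^T 1/\sqrt{N_t(s_t,a_t)}\le 2\sqrt{SAT}$ by Cauchy--Schwarz yields the $49HS\sqrt{AT\log(AT)}$ term. I expect this assembly to be the main obstacle: pinning down the confidence radii, matching $\theta_k$ to $\theta_*$ in the Bayesian sense, using doubling to pass from $N_{t_k}$ to $N_t$, and absorbing the out-of-confidence remainder all have to combine to produce exactly the $S\sqrt{A}$ scaling and the logarithmic factors with the stated constants.
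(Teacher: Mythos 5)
Your proposal is correct and follows essentially the same route as the paper: the identical Bellman-equation decomposition into $R_0+R_1+R_2$ (you merely make the zero-mean martingale term explicit where the paper folds it into $R_2$), the same macro-episode/doubling argument yielding $K_T\le\sqrt{2SAT\log T}$, the same stopping-time posterior-sampling identity with $\sigma(h_{t_k})$-measurable weights, and the same empirical-mean confidence-set treatment of $R_2$ with the doubling criterion passing from $N_{t_k}$ to $N_t$. The only discrepancies are harmless constant slack (your $+1$ in the $R_0$ bound, and $2\sqrt{SAT}$ where the paper's pigeonhole step gives $3\sqrt{SAT}$), which you yourself flag as bookkeeping.
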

The proof of Theorem \ref{thm:regretbound} appears in Section \ref{sec:analysis}.
\begin{remark}
Note that our regret bound has the same order in $H,S,A$ and $T$ as the optimistic algorithm in \cite{bartlett2009regal} which is the best available bound
for weakly communicating MDPs. Moreover, the bound does not depend on the prior distribution or other problem-dependent parameters such as the recurrent time of the optimal policy used in the regret bound of \cite{gopalan2015thompson}. 
\end{remark}

\subsection{Approximation Error}

At the beginning of each episode, TSDE computes the optimal stationary policy $\pi_k$ for the parameter $\theta_k$.
This step requires the solution to a fixed finite MDP. Policy iteration or value iteration can be used to solve the sampled MDP, but the resulting stationary policy may be only approximately optimal in practice.
We call $\pi$ an $\epsilon-$approximate policy if 
\begin{align*}
c(s,\pi(s)) + \sum_{s' \in \mathcal S}\theta(s'|s,\pi(s))v(s',\theta)
\leq  \min_{a \in \mathcal A}\Big\{
c(s,a) + \sum_{s' \in \mathcal S}\theta(s'|s,a)v(s',\theta)
\Big\} + \epsilon.
\end{align*}

When the algorithm returns an $\epsilon_k-$approximate policy $\tilde\pi_k$ instead of the optimal stationary policy $\pi_k$ at episode $k$, we have the following regret bound in the presence of such approximation error.

\begin{theorem}
\label{thm:regretbound_approximate}
If TSDE computes an $\epsilon_k-$approximate policy $\tilde\pi_k$ instead of the optimal stationary policy $\pi_k$ at each episode $k$, the expected regret of TSDE satisfies
\begin{align*}
R(T,\text{TSDE})  \leq \tilde O(HS\sqrt{AT}) + \ee\Big[\sum_{k: t_k \leq T} T_k\epsilon_k\Big].
\end{align*}
Furthermore, if $\epsilon_k \leq \frac{1}{k+1}$,
$
\ee\Big[\sum_{k: t_k \leq T} T_k\epsilon_k\Big] \leq \sqrt{2SAT\log(T)}.
$

%\begin{align}
%\ee\Big[\sum_{k: t_k \leq T} T_k\epsilon_k\Big] \leq \sqrt{2SAT\log{T}}.
%\end{align}
\end{theorem}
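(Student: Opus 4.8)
The plan is to reuse the regret decomposition from the proof of Theorem~\ref{thm:regretbound} essentially unchanged, isolating the single place where the exact optimality of $\pi_k$ is invoked and tracking the slack that an $\epsilon_k$-approximate policy introduces there. In the exact analysis the per-step cost is rewritten through the Bellman equation \eqref{eq:DP_inf} for the sampled model: because $\pi_k$ attains the minimum, one has $c(s_t,a_t) = J(\theta_k) + v(s_t,\theta_k) - \sum_{s'}\theta_k(s'|s_t,a_t)v(s',\theta_k)$. This identity is the only step that uses exact optimality; the remaining manipulations, namely the telescoping of the $v(s_t,\theta_k)-v(s_{t+1},\theta_k)$ differences and the martingale/concentration control of $v(s_{t+1},\theta_k) - \sum_{s'}\theta_k(s'|s_t,a_t)v(s',\theta_k)$, use only that $v(\cdot,\theta_k)$ solves the Bellman equation and satisfies $sp(\theta_k)\leq H$.

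First I would substitute the defining inequality of an $\epsilon_k$-approximate policy for the realized action $a_t=\tilde\pi_k(s_t)$, which gives $c(s_t,a_t) \leq J(\theta_k) + v(s_t,\theta_k) - \sum_{s'}\theta_k(s'|s_t,a_t)v(s',\theta_k) + \epsilon_k$; the extra additive $\epsilon_k$ is the only change. Crucially, $v(\cdot,\theta_k)$ remains the \emph{exact} value function of the sampled MDP, so every bound on $v$ and its span is unaffected, and the posterior-sampling identity $\ee[J(\theta_k)]=\ee[J(\theta_*)]$ used to control the $\sum_k T_k(J(\theta_k)-J(\theta_*))$ term holds conditionally on the history $h_{t_k}$ at the stopping time $t_k$ and is unaffected by which policy is then played. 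The realized action enters the concentration term only through the visit counts $N_t(s,a)$, which appear symmetrically for $\theta_*$ and $\theta_k$, so that estimate is identical as well. Hence all terms are bounded exactly as in Theorem~\ref{thm:regretbound} by $\tilde O(HS\sqrt{AT})$, and summing the slack over the $T_k$ time steps of episode $k$ and over episodes with $t_k\leq T$ produces the stated extra term $\ee[\sum_{k:t_k\leq T}T_k\epsilon_k]$.

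For the refinement, suppose $\epsilon_k \leq \frac{1}{k+1}$. The first stopping criterion in \eqref{eq:tk} forces $T_k \leq T_{k-1}+1$, and with $T_0=1$ this yields $T_k \leq k+1$ by induction. Therefore $T_k\epsilon_k \leq 1$ for every episode, so $\sum_{k:t_k\leq T}T_k\epsilon_k \leq K_T$, where $K_T$ is the number of episodes up to time $T$. I would then invoke the episode-count bound $K_T \leq \sqrt{2SAT\log(T)}$ established in the proof of Theorem~\ref{thm:regretbound}; taking expectations gives $\ee[\sum_{k:t_k\leq T}T_k\epsilon_k]\leq \sqrt{2SAT\log(T)}$.

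The main obstacle is conceptual rather than computational: one must verify that inserting $\tilde\pi_k$ perturbs \emph{nothing} beyond the single Bellman step, i.e., that both the posterior-matching identity and the transition-concentration bound are genuinely policy-agnostic and continue to reference the exact value function $v(\cdot,\theta_k)$. Once this is confirmed, the first claim is an immediate additive perturbation of the Theorem~\ref{thm:regretbound} bound, and the second follows from the elementary counting argument above together with the existing episode-count lemma.
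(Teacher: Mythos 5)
Your proposal is correct and follows essentially the same route as the paper: the same single substitution of the $\epsilon_k$-approximate Bellman inequality $c(s_t,a_t)\leq J(\theta_k)+v(s_t,\theta_k)-\sum_{s'}\theta_k(s'|s_t,a_t)v(s',\theta_k)+\epsilon_k$ into the regret decomposition, yielding $R_0+R_1+R_2+\ee\bigl[\sum_{k}T_k\epsilon_k\bigr]$ with the first three terms bounded exactly as in Theorem \ref{thm:regretbound}. Your second-part argument ($T_k\leq T_{k-1}+1\leq k+1$, hence $T_k\epsilon_k\leq 1$ and the sum is at most $K_T\leq\sqrt{2SAT\log(T)}$ by Lemma \ref{lm:boundKT}) is precisely the paper's own counting argument.
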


Theorem \ref{thm:regretbound_approximate} shows that the approximation error in the computation of optimal stationary policy is only additive to the regret under TSDE. 
The regret bound would remain $\tilde O(HS\sqrt{AT})$ if the approximation error is such that $\epsilon_k \leq \frac{1}{k+1}$.
The proof of Theorem \ref{thm:regretbound_approximate} is in the appendix due to the lack of space.

\section{Analysis}
\label{sec:analysis}

\subsection{Number of Episodes}

To analyze the performance of TSDE over $T$ time steps, define $K_T = \argmax\{k: t_k \leq T\}$ be the number of episodes of TSDE until time $T$. 
Note that $K_T$ is a random variable because the number of visits $N_t(x,u)$ depends on the dynamical state trajectory.
In the analysis for time $T$ we use the convention that $t_{(K_T+1)} = T+1$.
We provide an upper bound on $K_T$ as follows.
\begin{lemma}
\label{lm:boundKT}
\begin{align*}
K_T  \leq  \sqrt{ 2 SA T\log(T) }.
\end{align*}
\end{lemma}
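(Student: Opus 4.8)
The plan is to play the two stopping criteria against each other: the first criterion forces the episode lengths to grow (so that accumulating many episodes necessarily consumes a lot of time), while the second (doubling) criterion can only be triggered a logarithmic number of times for each state-action pair (so it cannot reset this growth too often). Quantifying ``time grows at least quadratically in the number of episodes, interrupted at most $O(SA\log T)$ times'' and inverting the relation will give the $\sqrt{\cdot}$ bound.

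First I would record the elementary growth inequality $T_k \le T_{k-1}+1$. This is immediate from the first stopping criterion: the while-loop of episode $k$ runs only while $t \le t_k + T_{k-1}$, hence lasts at most $T_{k-1}+1$ steps. Combined with the telescoping identity $\sum_{k=1}^{K_T} T_k = t_{K_T+1}-t_1 = T$ (using $t_1=1$ and the convention $t_{K_T+1}=T+1$), this is the backbone: a long sequence of episodes must, under the first criterion alone, pile up a large total duration.

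Next I would group episodes into \emph{macro-episodes}, starting a new macro-episode precisely when the second criterion fires. Within a macro-episode every episode but the last ends by the first criterion, so their lengths increase by exactly one; thus a macro-episode containing $r_i$ episodes has duration at least $1+2+\cdots+(r_i-1) = \tfrac{1}{2}(r_i-1)r_i$, i.e. it grows quadratically in $r_i$. Summing over the $M$ macro-episodes and applying Cauchy-Schwarz with $\sum_i r_i = K_T$ gives $T \ge \tfrac{1}{2}\sum_i r_i^2 - O(K_T) \ge \tfrac{K_T^2}{2M} - O(K_T)$, which rearranges to $K_T = O(\sqrt{MT})$. It then remains to bound $M$: the second criterion fires only when some $N_t(s,a)$ exceeds twice its value at the episode's start, and for a fixed pair this count lies in $[1,T]$ and at least doubles between successive firings, so it can cause at most $\log_2 T$ firings; summing over the $SA$ pairs yields $M = O(SA\log T)$. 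Substituting gives $K_T = O(\sqrt{SAT\log T})$, of the claimed order.

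The delicate part is not the growth inequality but the bookkeeping needed to land on the stated constant and logarithmic factor $\sqrt{2SAT\log T}$: one must combine the per-macro-episode lower bound (handling the last, doubling-terminated episode, whose length one can only bound below by $1$) with the count $M$ of macro-episodes so that the lower-order $O(K_T)$ and $O(SA)$ terms are absorbed and the leading coefficient comes out as $\sqrt{2}$. I expect this interplay of the two criteria — quadratic accumulation from criterion~1 versus logarithmically-bounded resets from criterion~2, tracked to the right constant — to be the main obstacle, whereas the structural ingredients ($T_k\le T_{k-1}+1$, the telescoping sum, and the doubling count) are routine.
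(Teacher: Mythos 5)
Your proposal matches the paper's proof essentially step for step: the same macro-episode decomposition at each firing of the second criterion, the same quadratic lower bound on macro-episode duration forced by the first criterion, Cauchy--Schwarz against $\sum_i \tilde T_i = T$, and the doubling count $M \le SA\log T$ summed over state-action pairs. The constant-tracking you flag as the delicate part is handled in the paper by a slightly sharper per-macro-episode bound: since $T_{n_i-1}\ge 1$, the episode lengths within a macro-episode are at least $2,3,\dots$ and the final (doubling-terminated) episode has length at least $1$, giving $\tilde T_i \ge \tfrac{1}{2}r_i(r_i+1)$ and hence $r_i \le \sqrt{2\tilde T_i}$ exactly, so the $O(K_T)$ slack never appears and the coefficient $\sqrt{2}$ falls out directly.
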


\begin{proof}
Define macro episodes with start times $t_{n_i}, i=1,2,\dots$ where $t_{n_1} = t_1$ and 
\begin{align*}
t_{n_{i+1}} = \min\{ & t_k > t_{n_i}:\quad 
N_{t_k}(s,a) > 2N_{t_{k-1}}(s,a) \text{ for some }(s,a) 
\}.
\end{align*}
The idea is that each macro episode starts when the second stopping criterion happens.
Let $M$ be the number of macro episodes until time $T$
and define $n_{(M+1)} = K_T+1$.

Let $\tilde T_i = \sum_{k=n_{i}}^{n_{i+1}-1} T_{k}$ be the length of the $i$th macro episode.
By the definition of macro episodes, any episode except the last one in a macro episode must be triggered by the first stopping criterion. Therefore, within the $i$th macro episode, $T_{k} = T_{k-1} +1$ for all $k = n_{i}, n_{i}+1,\dots,n_{i+1}-2$.
Hence,
\begin{align*}
\tilde T_i = \sum_{k=n_{i}}^{n_{i+1}-1} T_{k}
= &\sum_{j=1}^{n_{i+1}-n_i-1} (T_{n_i-1}+j) + T_{n_{i+1}-1}
\notag\\ 
\geq &\sum_{j=1}^{n_{i+1}-n_i-1} (j+1) + 1 = 0.5(n_{i+1}-n_i)(n_{i+1}-n_i+1).
\end{align*}
Consequently, $n_{i+1} - n_{i} \leq \sqrt{2 \tilde T_i}$ for all $i=1,\dots,M$.
From this property we obtain
\begin{align}
K_T = &n_{M+1}-1
 = \sum_{i=1}^{M} (n_{i+1} - n_{i})
 \leq \sum_{i=1}^{M} \sqrt{2 \tilde T_i} .
 \label{eq:KTboundinpf}
 \end{align}
Using \eqref{eq:KTboundinpf} and the fact that $\sum_{i=1}^M \tilde T_i = T$ we get
\begin{align}
K_T \leq \sum_{i=1}^{M} \sqrt{2 \tilde T_i} \leq & 
\sqrt{ M\sum_{i=1}^{M} 2 \tilde T_i } 
= \sqrt{ 2 MT}
\label{eq:KTboundfromM}
\end{align}
where the second inequality is Cauchy-Schwarz. 

%From Lemma \ref{lm:Mbound} in the appendix, 
From Lemma 6 in the appendix, 
the number of macro episodes $M \leq SA \log(T)$.
Substituting this bound into \eqref{eq:KTboundfromM} we obtain the result of this lemma.
\end{proof}

\begin{remark}
TSDE computes the optimal stationary policy of a finite MDP at each episode. Lemma \ref{lm:boundKT} ensures that such computation only needs to be done at a sublinear rate of $\sqrt{ 2 SA T\log(T) }$.
\end{remark}

\subsection{Regret Bound}
As discussed in \cite{osband2013more,osband2016posterior,russo2014learning}, one key property of Thompson/Posterior Sampling algorithms is that for any function $f$, $\ee[f(\theta_t)] = \ee[f(\theta_*)]$ if $\theta_t$ is sampled from the posterior distribution at time $t$. This property leads to regret bounds for algorithms with fixed sampling episodes since the start time $t_k$ of each episode is deterministic.
However, our TSDE algorithm has dynamic episodes that requires us to have the stopping-time version of the above property.
\begin{lemma}
\label{lm:PS_expectation}
Under TSDE, $t_k$ is a stopping time for any episode $k$. Then for any measurable function $f$ and any $\sigma(h_{t_k})-$measurable random variable $X$, we have
\begin{align*}
%\ee \Big[ f(\theta_{t_k},X)|h_{t_k} \Big]= & \ee \Big[ f(\theta_*,X)|h_{t_k} \Big]
\ee \Big[ f(\theta_{k},X)\Big]= & \ee \Big[ f(\theta_*,X) \Big]
\end{align*}
\end{lemma}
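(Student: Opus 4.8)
The plan is to exploit the defining property of Thompson sampling at deterministic times and then lift it to the random start time $t_k$ by summing over the countably many values $t_k$ can take. First I would recall the fixed-time property: for any deterministic time $t$, conditioned on the history $h_t$ the true parameter $\theta_*$ is distributed according to the posterior $\mu_t$ by the very definition of $\mu_t$. Moreover, in TSDE the sample $\theta_k$ is drawn from $\mu_{t_k}$ using randomization that is independent of everything else given the posterior, so on the event $\{t_k = t\}$ the sample $\theta_k$ is also distributed as $\mu_t$ conditioned on $h_t$. Thus, conditioned on $h_t$, both $\theta_k$ (restricted to $\{t_k=t\}$) and $\theta_*$ share the same conditional law $\mu_t$.

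The decomposition step is to write
\begin{align*}
\ee[f(\theta_k, X)] = \sum_{t=1}^{\infty} \ee\big[ f(\theta_k, X)\mathbf{1}_{\{t_k = t\}} \big],
\end{align*}
which is valid since $t_k$ takes values in $\{1,2,\dots\}$. The crucial use of the stopping-time hypothesis enters here: because $t_k$ is a stopping time and $X$ is $\sigma(h_{t_k})$-measurable, for each fixed $t$ both the indicator $\mathbf{1}_{\{t_k=t\}}$ and the product $X\mathbf{1}_{\{t_k=t\}}$ are $\sigma(h_t)$-measurable. Consequently, conditioning on $h_t$ freezes both $X$ and the indicator and leaves only the fresh sampling randomness of $\theta_k$, giving
\begin{align*}
\ee\big[f(\theta_k, X)\mathbf{1}_{\{t_k=t\}} \,\big|\, h_t\big] = \mathbf{1}_{\{t_k=t\}} \int f(\theta, X)\, \mu_t(d\theta) = \ee\big[f(\theta_*, X)\mathbf{1}_{\{t_k=t\}} \,\big|\, h_t\big],
\end{align*}
where the last equality uses that $\theta_*$ has conditional law $\mu_t$ given $h_t$ and that $X$ and $\{t_k=t\}$ are $h_t$-measurable. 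Taking total expectations and summing over $t$ yields the claim.

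The main obstacle, and the reason the stopping-time assumption is indispensable, is ensuring that $X$ and the event $\{t_k=t\}$ become measurable with respect to $h_t$ after conditioning, so they can be treated as constants relative to the independent sampling of $\theta_k$. Without the stopping-time property the event $\{t_k=t\}$ could depend on future observations, and the fixed-time identity $\ee[g(\theta_*)\mid h_t] = \int g\,d\mu_t$ could not be applied termwise. I would therefore state explicitly at the outset that $t_k$ is a stopping time with respect to the filtration $(\sigma(h_t))_t$ — which holds because both stopping criteria in \eqref{eq:tk} depend only on past visit counts and past episode lengths — and note that this is precisely what licenses the interchange of the sum with the conditioning argument above.
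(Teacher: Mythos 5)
Your proof is correct and takes essentially the same approach as the paper: both hinge on $t_k$ being a stopping time, so that conditioning on the history at $t_k$ leaves $\theta_k$ and $\theta_*$ with the common conditional law $\mu_{t_k}$, with $X$ frozen by its $\sigma(h_{t_k})$-measurability. The only difference is presentational — the paper conditions directly on $h_{t_k}$ in one line, while you rigorously unpack that step by decomposing over the events $\{t_k = t\}$ and applying the fixed-time posterior identity termwise.
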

\begin{proof}
From the definition \eqref{eq:tk}, the start time $t_k$ is a stopping-time, i.e. $t_k$ is $\sigma(h_{t_k})-$measurable.
Note that $\theta_{k}$ is randomly sampled from the posterior distribution $\mu_{t_k}$.
Since ${t_k}$ is a stopping time, ${t_k}$ and $\mu_{t_k}$ are both measurable with respect to $\sigma(h_{t_k})$.
From the assumption, $X$ is also measurable with respect to $\sigma(h_{t_k})$.
Then conditioned on $h_{t_k}$, the only randomness in $f(\theta_{k},X)$ is the random sampling in the algorithm. This gives the following equation:
\begin{align}
\ee\Big[ f(\theta_{k},X)|h_{t_k} \Big]
=\ee\Big[ f(\theta_{k},X)|h_{t_k},{t_k},\mu_{t_k} \Big] 
= & \int f(\theta,X)\mu_{t_k}(d \theta)
=\ee\Big[ f(\theta_*,X)|h_{t_k} \Big]
\label{eq:lm1L}
\end{align}
since $\mu_{t_k}$ is the posterior distribution of $\theta_*$ given $h_{t_k}$.
%By its definition, $\mu_{t_k}$ is the posterior distribution of $\theta_*$ given $h_{t_k}$. Then we get
%\begin{align}
%\ee\Big[ f(\theta_*,X)|h_{t_k} \Big] = \int f(\theta,X)\mu_{t_k}(d \theta).
%\label{eq:lm1R}
%\end{align}
%Equations \eqref{eq:lm1L}-\eqref{eq:lm1R} gives
%$\ee\Big[ f(\theta_{k},X)|h_{t_k} \Big]
%= \ee\Big[ f(\theta_*,X)|h_{t_k} \Big]$,
Now the result follows by taking the expectation of both sides.
%\begin{align}
%\ee\Big[ f(\theta_{k},X)|h_{t_k} \Big]
%= &\ee\Big[ f(\theta_*,X)|h_{t_k} \Big].
%\end{align}
%Then the result follows by taking the expectation of both side of the above equation.
\end{proof}

For $t_k\leq t<t_{k+1}$ in episode $k$, the Bellman equation \eqref{eq:DP_inf} holds by Assumption \ref{assum:WASP} for $s = s_t$, $\theta = \theta_k$ and action $a_t = \pi_k(s_t)$. Then we obtain
\begin{align}
%& J(\theta_k) +  v(s_t,\theta_k)
%= c(s_t,a_t) + \sum_{s' \in \mathcal S}\theta_k(s'|s_t,a_t)v(s',\theta_k).
& 
c(s_t,a_t)
= J(\theta_k) +  v(s_t,\theta_k)- \sum_{s' \in \mathcal S}\theta_k(s'|s_t,a_t)v(s',\theta_k).
 \label{eq:DP_analysis2}
\end{align}
%Let $K_T = |\{t_k \leq T\}|$ be the number of episodes of TSDE until time $T$ and $t_{K_T+1}$ is defined to be $T$.
%According to TSDE, $K_T$ is the number of times the parameter changes upto time $T$.
%Note that $K_T$ is generally a random variable because $N_t(x,u)$ depends on the system trajectory.
Using \eqref{eq:DP_analysis2}, the expected regret of TSDE is equal to
\begin{align}
%&R(T,\text{TSDE})
%\notag\\
% = & \sum_{t=1}^T \ee\Big[ c(s_t,a_t)  - J(\theta_*) \Big]
%\notag\\
% = 
 & \ee\Big[ \sum_{k=1}^{K_T}\sum_{t=t_k}^{t_{k+1}-1} c(s_t,a_t)  \Big] - T \ee\Big[J(\theta_*) \Big]
\notag\\
= & \ee\Big[ \sum_{k=1}^{K_T}T_k J(\theta_k) \Big]- T \ee\Big[J(\theta_*) \Big]
+
\ee\Big[\sum_{k=1}^{K_T}\sum_{t=t_k}^{t_{k+1}-1}  \Big[ v(s_t,\theta_k) -\sum_{s' \in \mathcal S}\theta_k(s'|s_t,a_t)v(s',\theta_k)
\Big]\Big]
\notag\\
=& R_0+R_1+R_2,
\label{eq:regretbound}
\end{align}
where $R_0$, $R_1$ and $R_2$ are given by
\begin{align*}
&R_0 = \ee\Big[ \sum_{k=1}^{K_T}T_k J(\theta_k) \Big]- T \ee\Big[J(\theta_*) \Big],
\\
&R_1 = \ee\Big[\sum_{k=1}^{K_T}\sum_{t=t_k}^{t_{k+1}-1}  \Big[v(s_t,\theta_k) - v(s_{t+1},\theta_k)\Big]\Big],
\\
&R_2 = \ee\Big[\sum_{k=1}^{K_T}\sum_{t=t_k}^{t_{k+1}-1}  \Big[ v(s_{t+1},\theta_k) -\sum_{s' \in \mathcal S}\theta_k(s'|s_t,a_t)v(s',\theta_k)\Big]\Big].
\end{align*}

We proceed to derive bounds on $R_0$, $R_1$ and $R_2$.

Based on the key property of Lemma \ref{lm:PS_expectation}, we derive an upper bound on $R_0$.
\begin{lemma}
\label{lm:R0}
The first term $R_0$ is bounded as
\begin{align*}
R_0 \leq &\ee[K_T].
\end{align*}
\end{lemma}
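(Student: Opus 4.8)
The plan is to exploit the stopping-time Thompson Sampling identity of Lemma~\ref{lm:PS_expectation} to replace $J(\theta_k)$ by $J(\theta_*)$ inside the sum defining $R_0$. The obstacle is that the episode length $T_k$ depends on the sampled parameter $\theta_k$ (through the second stopping criterion), so $T_k J(\theta_k)$ is \emph{not} of the form $f(\theta_k,X)$ with $X$ being $\sigma(h_{t_k})$-measurable, and Lemma~\ref{lm:PS_expectation} does not apply directly. The first stopping criterion is exactly what resolves this: it forces $T_k \leq T_{k-1}+1$, and $T_{k-1}$ is determined by the history before $t_k$, hence $\sigma(h_{t_k})$-measurable. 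This decoupling of the episode length from the freshly sampled parameter is the crucial device.

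First I would record two structural facts: (i) by the convention $t_{(K_T+1)}=T+1$ we have $\sum_{k=1}^{K_T} T_k = t_{(K_T+1)} - t_1 = T$; and (ii) since $c\in[0,1]$, the optimal average cost satisfies $J(\theta_k)\in[0,1]$, in particular $J(\theta_k)\geq 0$. Using $T_k \leq T_{k-1}+1$ together with $J(\theta_k)\geq 0$, I bound
\[
\sum_{k=1}^{K_T} T_k J(\theta_k) \leq \sum_{k=1}^{K_T} (T_{k-1}+1) J(\theta_k) = \sum_{k=1}^{K_T} T_{k-1} J(\theta_k) + \sum_{k=1}^{K_T} J(\theta_k).
\]
The last sum is at most $K_T$ because each $J(\theta_k)\leq 1$, and this is precisely the $\ee[K_T]$ term we are aiming for.

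Next I would treat $\ee\big[\sum_{k=1}^{K_T} T_{k-1} J(\theta_k)\big]$ by rewriting it as $\ee\big[\sum_{k\geq 1} T_{k-1} J(\theta_k)\,\mathds{1}\{t_k\leq T\}\big]$ and applying Lemma~\ref{lm:PS_expectation} term by term. For each $k$, the variable $X_k := T_{k-1}\,\mathds{1}\{t_k\leq T\}$ is $\sigma(h_{t_k})$-measurable: $T_{k-1}=t_k-t_{k-1}$ is fixed by the history before $t_k$, and since $t_k$ is a stopping time the event $\{t_k\leq T\}$ lies in $\sigma(h_{t_k})$. Taking $f(\theta,X)=X\,J(\theta)$ gives $\ee[X_k J(\theta_k)] = \ee[X_k J(\theta_*)]$ for every $k$, and summing yields
\[
\ee\Big[\sum_{k=1}^{K_T} T_{k-1} J(\theta_k)\Big] = \ee\Big[J(\theta_*)\sum_{k=1}^{K_T} T_{k-1}\Big].
\]
The indicator trick is what lets the stopping-time identity be applied cleanly despite the random episode count $K_T$; I expect this bookkeeping to be the main subtlety of the argument.

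Finally I would evaluate $\sum_{k=1}^{K_T} T_{k-1} = T_0 + \sum_{k=1}^{K_T-1} T_k = 1 + (T - T_{K_T})$, using $T_0=1$ and fact (i). Substituting into the definition of $R_0$, the $T\,\ee[J(\theta_*)]$ term cancels the leading $T$, leaving
\[
R_0 \leq \ee\big[J(\theta_*)(1 - T_{K_T})\big] + \ee[K_T] \leq \ee[K_T],
\]
where the final inequality uses $T_{K_T}\geq 1$ and $J(\theta_*)\geq 0$. This establishes the claim.
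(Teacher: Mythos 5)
Your proof is correct and takes essentially the same route as the paper's: the first stopping criterion gives $T_k \leq T_{k-1}+1$ with $T_{k-1}$ being $\sigma(h_{t_k})$-measurable, the indicator $\mathds{1}_{\{t_k\leq T\}}$ makes the weight admissible for the stopping-time identity of Lemma~\ref{lm:PS_expectation}, and $J(\cdot)\in[0,1]$ plus $\sum_{k=1}^{K_T}T_{k-1}\leq T$ finishes the bound. The only cosmetic differences are that you bound $\sum_{k=1}^{K_T}J(\theta_k)\leq K_T$ pointwise rather than passing the $+1$ through Lemma~\ref{lm:PS_expectation}, and you compute $\sum_{k=1}^{K_T}T_{k-1}=1+T-T_{K_T}$ exactly (using $T_{K_T}\geq 1$) where the paper simply bounds it by $T$.
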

\begin{proof}
From monotone convergence theorem we have
\begin{align*}
R_0 
= &\ee\Big[ \sum_{k=1}^{\infty}\mathds{1}_{\{t_{k}\leq T\}}T_k J(\theta_k) \Big]- T \ee\Big[J(\theta_*) \Big]
%\notag\\
= \sum_{k=1}^{\infty}\ee\Big[ \mathds{1}_{\{t_{k}\leq T\}}T_k J(\theta_{k})\Big]  - T\ee\Big[J(\theta_*) \Big].
\end{align*}
Note that the first stopping criterion of TSDE ensures that $T_k \leq T_{k-1}+1$ for all $k$.
Because $J(\theta_{k}) \geq 0$, each term in the first summation satisfies
\begin{align*}
\ee\Big[ \mathds{1}_{\{t_{k}\leq T\}}T_k J(\theta_{k})\Big]
\leq &\ee\Big[ \mathds{1}_{\{t_{k}\leq T\}}(T_{k-1}+1) J(\theta_{k})\Big].
\end{align*}
Note that $\mathds{1}_{\{t_{k}\leq T\}}(T_{k-1}+1)$ is measurable with respect to $\sigma(h_{t_k})$. Then, Lemma \ref{lm:PS_expectation} gives
\begin{align*}
\ee\Big[ \mathds{1}_{\{t_{k}\leq T\}}(T_{k-1}+1) J(\theta_{k})\Big]
= &\ee\Big[\mathds{1}_{\{t_{k}\leq T\}}(T_{k-1}+1) J(\theta_{*}) \Big].
\end{align*}

Combining the above equations we get
\begin{align*}
R_0 
\leq &\sum_{k=1}^{\infty}\ee\Big[\mathds{1}_{\{t_{k}\leq T\}}(T_{k-1}+1) J(\theta_{*}) \Big] - T\ee\Big[J(\theta_*) \Big]
\notag\\
= &\ee\Big[\sum_{k=1}^{K_T}(T_{k-1}+1) J(\theta_{*}) \Big] - T\ee\Big[J(\theta_*) \Big]
\notag\\
= &\ee\Big[K_T J(\theta_{*})\Big]
+ \ee\Big[\Big(\sum_{k=1}^{K_T}T_{k-1} - T\Big)J(\theta_{*})\Big]
\leq \ee\Big[K_T\Big]
\end{align*}
where the last equality holds because $J(\theta_{*}) \leq 1$ and $\sum_{k=1}^{K_T}T_{k-1}=T_0+\sum_{k=1}^{K_T-1}T_{k} \leq T$.
\end{proof}
Note that the first stopping criterion of TSDE plays a crucial role in the proof of Lemma \ref{lm:R0}.
It allows us to bound the length of an episode using the length of the previous episode which is measurable with respect to the information at the beginning of the episode.

The other two terms $R_1$ and $R_2$ of the regret are bounded in the following lemmas.
Their proofs follow similar steps to those in \cite{osband2013more,abbasi2015bayesian}. The proofs are in the appendix due to the lack of space.

\begin{lemma}
\label{lm:R1}
The second term $R_1$ is bounded as
\begin{align*}
R_1  \leq \ee[HK_T].
\end{align*}
\end{lemma}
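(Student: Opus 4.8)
The plan is to exploit the telescoping structure of the inner sum defining $R_1$. Since the sampled parameter $\theta_k$ is held fixed throughout the $k$th episode, the summand $v(s_t,\theta_k) - v(s_{t+1},\theta_k)$ telescopes over $t = t_k,\dots,t_{k+1}-1$, collapsing to the single difference $v(s_{t_k},\theta_k) - v(s_{t_{k+1}},\theta_k)$. This holds pathwise, for each realization of the trajectory and the (random) episode schedule, so I would first rewrite
\begin{align*}
R_1 = \ee\Big[\sum_{k=1}^{K_T} \big(v(s_{t_k},\theta_k) - v(s_{t_{k+1}},\theta_k)\big)\Big],
\end{align*}
using the convention $t_{(K_T+1)} = T+1$ for the final episode.

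Next I would invoke the span bound to control each telescoped term. Because the prior $\mu_1$ is supported on $\Omega_*$ by Assumption \ref{assum:WASP}, and Bayesian updating via \eqref{eq:beliefupdate} preserves the support, every sampled model $\theta_k$ also lies in $\Omega_*$, so $sp(\theta_k) \leq H$. Combined with the normalization $\min_{s\in\mathcal S} v(s,\theta_k) = 0$, this gives $0 \leq v(s,\theta_k) \leq H$ for all $s$. In particular $v(s_{t_k},\theta_k) \leq H$ while $v(s_{t_{k+1}},\theta_k) \geq 0$, so each telescoped term satisfies $v(s_{t_k},\theta_k) - v(s_{t_{k+1}},\theta_k) \leq H$. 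Note this bound is robust to the boundary convention: even for the last episode, where $s_{t_{k+1}}$ is the terminal state $s_{T+1}$, the nonnegativity of $v$ keeps the estimate valid.

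Finally, summing over the $K_T$ episodes yields $\sum_{k=1}^{K_T}\big(v(s_{t_k},\theta_k) - v(s_{t_{k+1}},\theta_k)\big) \leq H K_T$ pathwise, and taking expectations gives $R_1 \leq \ee[H K_T]$, as claimed. The argument is essentially immediate once the telescoping is observed, and the main point requiring care is justifying that the span bound applies to the \emph{sampled} models $\theta_k$ rather than merely to the true $\theta_*$; I would resolve this through the support-preservation property of the posterior noted above. No concentration or martingale machinery is needed here, since the per-episode contribution is deterministically bounded by the span.
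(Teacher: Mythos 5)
Your proof is correct and follows essentially the same route as the paper: telescope the inner sum to $v(s_{t_k},\theta_k) - v(s_{t_{k+1}},\theta_k)$, then bound each term by $H$ using $0 \leq v(s,\theta) \leq sp(\theta) \leq H$ under Assumption~\ref{assum:WASP}. Your extra remark that Bayesian updating preserves the support of $\mu_1$ (so the sampled $\theta_k$ indeed satisfies the span bound) is a detail the paper leaves implicit, but it does not change the argument.
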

%\begin{proof}
%$
%R_{1} 
%%= \ee\Big[\sum_{k=1}^{K_T}\sum_{t=t_k}^{t_{k+1}-1}  \Big[v(s_t,\theta_k) - v(s_{t+1},\theta_k)\Big]\Big]
%=\ee\Big[\sum_{k=1}^{K_T}  \Big[v(s_{t_k},\theta_k) - v(s_{t_{k+1}},\theta_k)\Big]\Big]
%\leq \ee\Big[H K_T\Big]$ because $0 \leq v(s,\theta) \leq sp(\theta) \leq H$ from Assumption \ref{assum:WASP}.
%\end{proof}

\begin{lemma}
\label{lm:R2}
The third term $R_2$ is bounded as
\begin{align*}
R_2 \leq 49 H S\sqrt{AT\log(AT)}.
\end{align*}
\end{lemma}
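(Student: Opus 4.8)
The plan is to bound $R_2$ by recognizing the inner expression as a martingale-difference-like quantity that telescopes into a count of state-action visits. First I would observe that for fixed $\theta_k$ and conditioned on the history $h_t$, the term $\sum_{s'}\theta_k(s'|s_t,a_t)v(s',\theta_k)$ is the conditional expectation of $v(s_{t+1},\theta_k)$ \emph{if} $s_{t+1}$ were generated by the sampled model $\theta_k$. Since the true transitions come from $\theta_*$, the quantity $v(s_{t+1},\theta_k) - \sum_{s'}\theta_k(s'|s_t,a_t)v(s',\theta_k)$ does not have zero conditional mean under the true dynamics; instead its mean is $\sum_{s'}\bigl(\theta_*(s'|s_t,a_t)-\theta_k(s'|s_t,a_t)\bigr)v(s',\theta_k)$. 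So the first step is to split $R_2$ into a genuine martingale part and a model-mismatch part.

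For the martingale part, I would define the $\sigma(h_{t+1})$-measurable differences $v(s_{t+1},\theta_k)-\sum_{s'}\theta_*(s'|s_t,a_t)v(s',\theta_k)$, which do have zero conditional expectation under the true dynamics, and argue via optional stopping (using that $K_T$ is determined by a stopping structure and each term is bounded by $H$ in absolute value) that this sum contributes at most a lower-order term. For the mismatch part, the key estimate is to control $\sum_{s'}\bigl(\theta_*(s'|s_t,a_t)-\theta_k(s'|s_t,a_t)\bigr)v(s',\theta_k)$ by $H$ times the total variation distance $\|\theta_*(\cdot|s_t,a_t)-\theta_k(\cdot|s_t,a_t)\|_1$, using $0\le v(\cdot,\theta_k)\le H$. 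Then I would invoke the posterior-sampling property from Lemma \ref{lm:PS_expectation} to replace, in expectation, the sampled $\theta_k$ by an empirical estimate and bound both $\theta_*$ and $\theta_k$ by their distance to the empirical transition counts, so that the $L_1$ deviation at $(s_t,a_t)$ scales like $\sqrt{S/N_{t_k}(s_t,a_t)}$ by standard concentration for empirical distributions.

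The heart of the argument is then the summation $\ee\bigl[\sum_k\sum_{t=t_k}^{t_{k+1}-1}\sqrt{S/\max(1,N_{t_k}(s_t,a_t))}\bigr]$. Here the second stopping criterion is essential: because $N_t(s,a)\le 2N_{t_k}(s,a)$ throughout episode $k$, the count $N_{t_k}(s_t,a_t)$ is comparable to the running count $N_t(s_t,a_t)$, so I can replace $N_{t_k}$ by $N_t$ up to a constant factor. This converts the sum into $\sum_{t}1/\sqrt{\max(1,N_t(s_t,a_t))}$, which is a standard pigeonhole sum: grouping by state-action pair and using $\sum_{n=1}^{N_T(s,a)}1/\sqrt{n}\le 2\sqrt{N_T(s,a)}$, together with $\sum_{s,a}N_T(s,a)=T$ and Cauchy--Schwarz over the $SA$ pairs, yields a bound of order $\sqrt{SAT}$. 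Multiplying by the $H\sqrt{S}$ factors from the span bound and the $L_1$ concentration gives the claimed $HS\sqrt{AT\log(AT)}$, with the logarithmic factor arising from the confidence level needed for the concentration inequality to hold uniformly over all $(s,a)$ and times.

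The main obstacle I expect is making the concentration step rigorous in a way that interacts correctly with the dynamic, data-dependent episode boundaries. Because $t_k$ is a stopping time and $\theta_k$ is resampled at these random times, one cannot naively apply a fixed-time concentration inequality; the careful move is to define a high-probability confidence set for $\theta_*$ around the empirical estimate that holds simultaneously for all $t$ (paying the union-bound logarithmic factor), then use Lemma \ref{lm:PS_expectation} to transfer the same confidence guarantee to $\theta_k$ in expectation. Keeping track of the constant (the paper commits to $49$) requires bookkeeping the contributions of the martingale term, the in-set mismatch term, and the small out-of-set probability term, but the structural work is the concentration-plus-pigeonhole combination enabled by the two stopping criteria.
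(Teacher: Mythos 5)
Your proposal follows essentially the same route as the paper's proof: bound the mismatch term by $H$ times the $L_1$ distance, pass through the empirical estimate $\hat\theta_k$ with a time-uniform confidence set (the source of the $\log$ factor), transfer the confidence guarantee from $\theta_*$ to $\theta_k$ via Lemma \ref{lm:PS_expectation}, exploit the second stopping criterion to replace $N_{t_k}$ by $N_t$ up to a factor of $2$, and finish with the pigeonhole/Cauchy--Schwarz sum giving $\sqrt{SAT}$ plus a small out-of-confidence-set remainder. The one cosmetic difference is your handling of the martingale part: no optional-stopping argument or lower-order term is needed, since the episodes partition $\{1,\dots,T\}$ and the tower property under the true dynamics makes that part vanish exactly in expectation, which is how the paper absorbs it in the first equality.
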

%The proof of Lemma \ref{lm:R2} is in the appendix due to the lack of space.

We are now ready to prove Theorem \ref{thm:regretbound}.
\begin{proof}[Proof of Theorem \ref{thm:regretbound}]

From \eqref{eq:regretbound}, 
$R(T,\text{TSDE}) = R_0+R_1 + R_2
\leq \ee[K_T] + \ee[HK_T]+ R_2$
%\begin{align}
%R(T,\text{TSDE}) = R_0+R_1 + R_2
%\leq \ee[K_T] + \ee[HK_T]+ R_2
%\end{align}
where the inequality comes from Lemma \ref{lm:R0}, Lemma \ref{lm:R1}.
Then the claim of the theorem directly follows from Lemma \ref{lm:boundKT} and Lemma \ref{lm:R2}.
\end{proof}

\section{Simulations}

In this section, we compare through simulations the performance of TSDE with three learning algorithms with the same regret order: UCRL2 \cite{jaksch2010near}, TSMDP \cite{gopalan2015thompson}, and Lazy PSRL \cite{abbasi2015bayesian}. 
UCRL2 is an optimistic algorithm with similar regret bounds. 
TSMDP and Lazy PSRL are TS algorithms for infinite horizon MDPs. TSMDP has the same regret order in $T$ given a recurrent state for resampling. The original regret analysis for Lazy PSRL is incorrect, but the regret bounds are conjectured to be correct \cite{osband2016posterior}.
We chose $\delta = 0.05$ for the implementation of UCRL2 and assume an independent Dirichlet prior with parameters $[0.1, \ldots, 0.1]$ over the transition probabilities for all TS algorithms. 
%We chose $\delta = 0.05$ for the implementation of UCRL2 and assume an independent Dirichlet prior with parameters $[0.1, 0.1, \ldots, 0.1]$ over the transition probabilities for all TS algorithms. 

We consider two environments: randomly generated MDPs and the RiverSwim example \cite{strehl2008analysis}.
For randomly generated MDPs, we use the independent Dirichlet prior over $6$ states and $2$ actions but with a fixed cost. 
We select the resampling state $s_0=1$ for TSMDP here since all states are recurrent under the Dirichlet prior.
The RiverSwim example models an agent swimming in a river who can choose to swim either left or right. The MDP consists of six states arranged in a chain with the agent starting in the leftmost state ($s=1$). If the agent decides to move left i.e with the river current then he is always successful but if he decides to move right he might fail with some probability. 
The cost function is given by: $c (s,a) = 0.8$ if $s=1$, $a =$ left;  $c (s,a) = 0$ if $s=6$, $a =$ right; and $c (s,a) = 1$ otherwise.
%The cost structure is as follows
%\begin{equation*}
%c (s,a) = \begin{cases}
%				0.8 \, \text{if} \, s = 1, a = left \\
%				0   \, , \text{if} \, s = 6, a = right \\
%				1  \, , \text{otherwise}
%				\end{cases}
%\end{equation*}
The optimal policy is to swim right to reach the rightmost state which minimizes the cost.
For TSMDP in RiverSwim, we consider two versions with $s_0=1$ and with $s_0=3$ for the resampling state.
We simulate 500 Monte Carlo runs for both the examples and run for $T = 10^5$.

\begin{figure}[h]
\centering        		
    \begin{subfigure}[t]{0.43\textwidth}
        \centering
        \includegraphics[width=\textwidth]{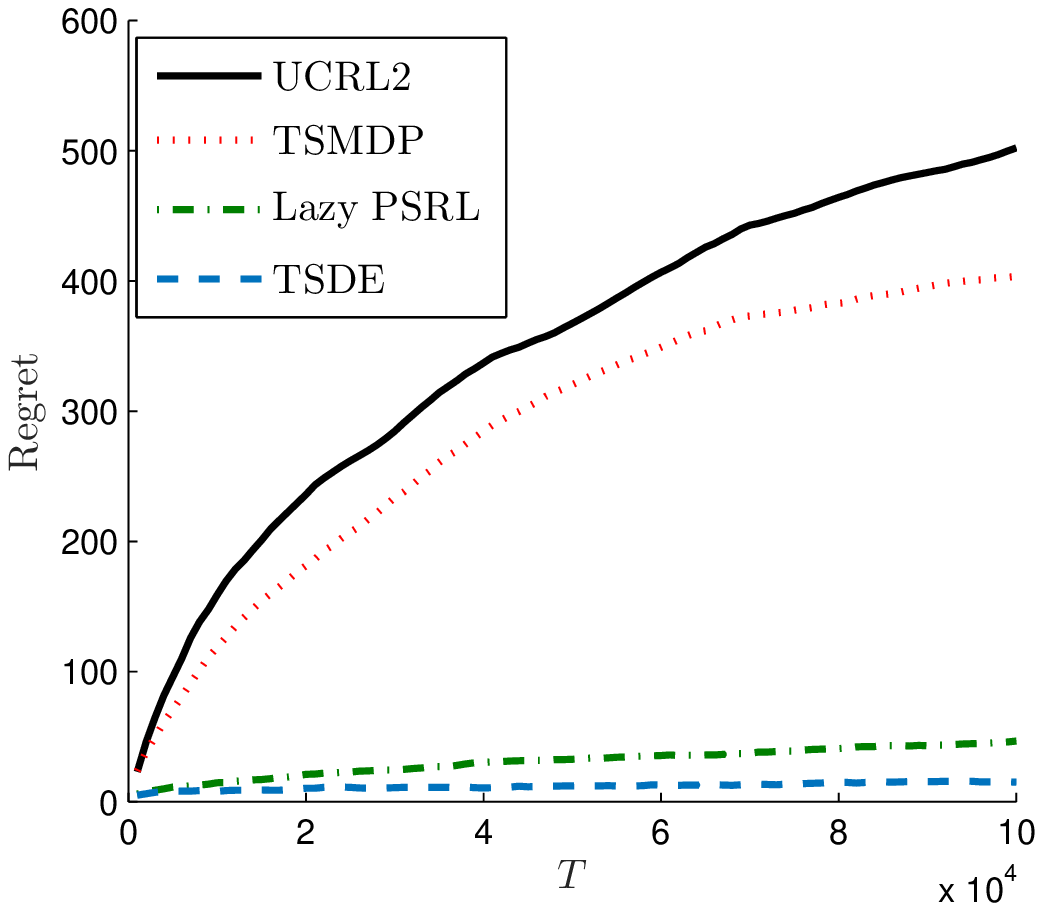}
        \caption{\small{Expected Regret vs Time for random MDPs}}
        \label{fig_random}
    \end{subfigure}    	
    \begin{subfigure}[t]{0.43\textwidth}
        \centering
        \includegraphics[width=\textwidth]{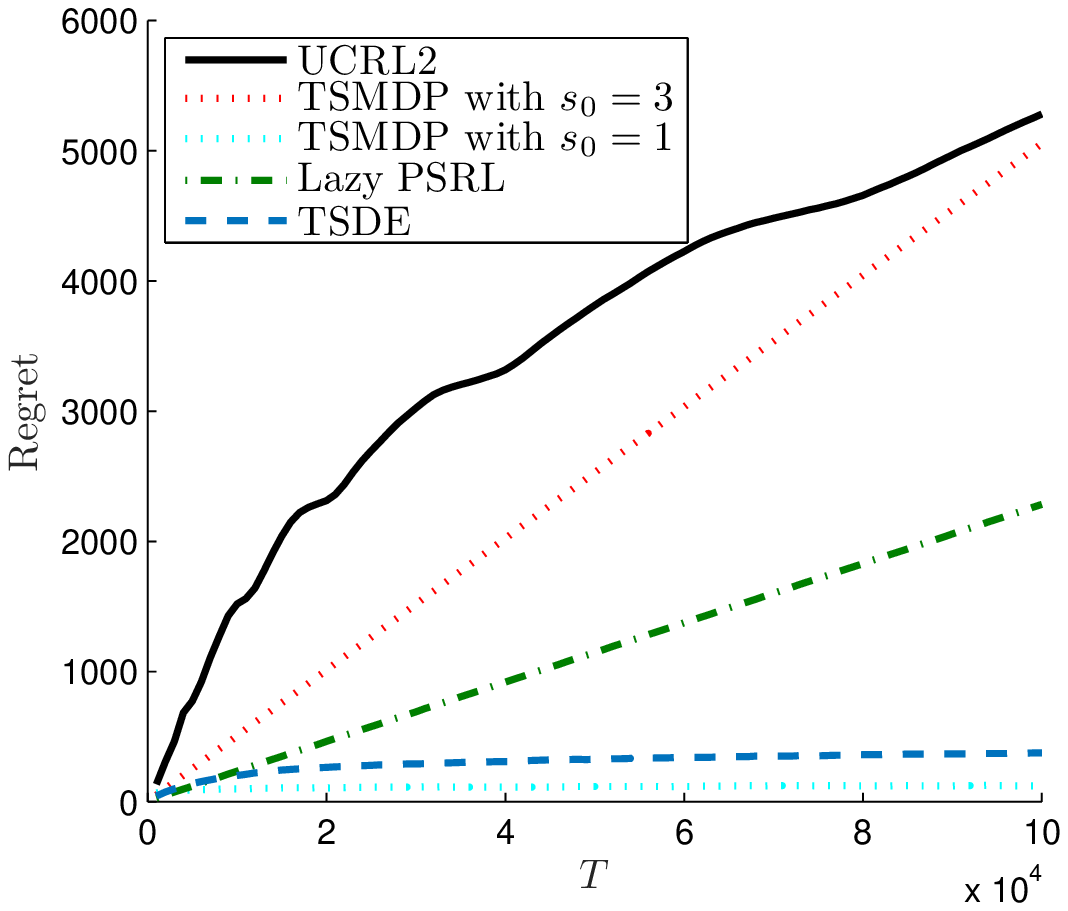}
        \caption{\small{Expected Regret vs Time for RiverSwim}}
        \label{fig_riverswim}
    \end{subfigure}  
\caption{\small{Simulation Results}}	
\label{fig_sim}
\end{figure}

From Figure \ref{fig_sim}(\subref{fig_random}) we can see that TSDE outperforms all the three algorithms in randomly generated MDPs. 
In particular, there is a significant gap between the regret of TSDE and that of UCRL2 and TSMDP.
The poor performance of UCRL2 assures the motivation to consider TS algorithms.
From the specification of TSMDP, its performance heavily hinges on the choice of an appropriate resampling state which is not possible for a general unknown MDP. This is reflected in the randomly generated MDPs experiment.

In the RiverSwim example, Figure \ref{fig_sim}(\subref{fig_riverswim}) shows that TSDE significantly outperforms UCRL2, Lazy PSRL, and TSMDP with $s_0=3$.
Although TSMDP with $s_0=1$ performs slightly better than TSDE, there is no way to pick this specific $s_0$ if the MDP is unknown in practice.
%it requires the specific choice of $s_0=1$ which is impractical when the MDP is unknown.
Since Lazy PSRL is also equipped with the doubling trick criterion, the performance gap between TSDE and Lazy PSRL highlights the importance of the first stopping criterion on the growth rate of episode length. We also like to point out that in this example, the MDP is fixed and is not generated from the Dirichlet prior. 
Therefore, we conjecture that TSDE also has the same regret bounds under a non-Bayesian setting.
%From the result, we conjecture that TSDE also has the same regret bounds under a non-Bayesian setting.

\section{Conclusion}
We propose the Thompson Sampling with Dynamic Episodes (TSDE) learning algorithm and establish  $\tilde O(HS\sqrt{AT})$ bounds on expected regret for the general subclass of weakly communicating MDPs. Our result fills a gap in the theoretical analysis of Thompson Sampling for MDPs. Numerical results validate that the TSDE algorithm outperforms other learning algorithms for infinite horizon MDPs.

%Numerical results validate that the TSDE algorithm outperforms other popular algorithms for random MDPs and is comparable in performance to TSMDP. 
%We note however that TSMDP is not as broadly applicable since it requires all MDPs in the class to have a current state, which is provided as an input to the algorithm. 

The TSDE algorithm determines the end of an episode by two stopping criteria. The second criterion comes from the doubling trick used in many reinforcement learning algorithms.  But the first criterion on the linear growth rate of episode length seems to be a new idea for episodic learning algorithms. The stopping criterion is crucial in the proof of regret bound (Lemma \ref{lm:R0}). 
The simulation results of TSDE versus Lazy PSRL further shows that this criterion is not only a technical constraint for proofs, it indeed helps balance exploitation and exploration.

\subsubsection*{Acknowledgments}
Yi Ouyang would like to thank Yang Liu from Harvard University for helpful discussions.
Rahul Jain and Ashutosh Nayyar were supported by NSF Grants 1611574 and 1446901.
%
%Use unnumbered third level headings for the acknowledgments. All
%acknowledgments go at the end of the paper. Do not include
%acknowledgments in the anonymized submission, only in the final paper.

%\section*{References}
%
%References follow the acknowledgments. Use unnumbered first-level
%heading for the references. Any choice of citation style is acceptable
%as long as you are consistent. It is permissible to reduce the font
%size to \verb+small+ (9 point) when listing the references. {\bf
%  Remember that you can go over 8 pages as long as the subsequent ones contain
%  \emph{only} cited references.}
%\medskip
%
%\small
%\newpage
\bibliographystyle{ieeetr}
\bibliography{ref_learning}

%[1] Alexander, J.A.\ \& Mozer, M.C.\ (1995) Template-based algorithms
%for connectionist rule extraction. In G.\ Tesauro, D.S.\ Touretzky and
%T.K.\ Leen (eds.), {\it Advances in Neural Information Processing
%  Systems 7}, pp.\ 609--616. Cambridge, MA: MIT Press.

%\newpage
\appendix

\section{Bound on the number of macro episodes}
\begin{lemma}
\label{lm:Mbound}
The number $M$ of macro episodes of TSDE is bounded by
\begin{align*}
M \leq&  SA \log(T).
\end{align*}
\end{lemma}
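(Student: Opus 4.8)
The plan is to exploit the defining feature of the macro episodes: by construction a new macro episode begins at $t_{n_{i+1}}$ exactly when the doubling (second) stopping criterion is the one that ends the preceding episode, i.e. when $N_{t_{n_{i+1}}}(s,a) > 2 N_{t_{n_{i+1}-1}}(s,a)$ for some pair $(s,a) \in \mathcal S \times \mathcal A$. Hence $M-1$ equals the number of macro-episode boundaries, and each such boundary can be charged to (at least) one witnessing state-action pair. First I would fix such an assignment, breaking ties arbitrarily so that every boundary is charged to exactly one $(s,a)$, and let $L(s,a)$ denote the number of boundaries charged to $(s,a)$. Then $M - 1 = \sum_{(s,a)} L(s,a)$.

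Next I would bound $L(s,a)$ for each fixed pair by a doubling argument. Let $t_{k_1} < t_{k_2} < \cdots < t_{k_L}$ be the boundary start times charged to $(s,a)$, and write $m_j = N_{t_{k_j}}(s,a)$. Because the visit counts are nondecreasing in $t$ and $k_j - 1 \ge k_{j-1}$, the count at the episode preceding boundary $k_j$ is at least $m_{j-1}$; combined with the triggering inequality $m_j > 2 N_{t_{k_j - 1}}(s,a)$ this yields $m_j > 2 m_{j-1}$. Iterating from $m_1 \ge 1$ gives $m_L > 2^{L-1}$, and since $m_L \le N_{T+1}(s,a) \le T$ we obtain $L(s,a) \le 1 + \log_2 N_{T+1}(s,a)$.

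Finally I would sum over all pairs. Using $\sum_{(s,a)} N_{T+1}(s,a) = T$ (every time step up to $T$ increments exactly one count) together with concavity of the logarithm (Jensen's inequality), one gets $\sum_{(s,a)} \log_2 N_{T+1}(s,a) \le SA \log_2\bigl(T/(SA)\bigr)$, so that $M \le 1 + SA + SA\log_2\bigl(T/(SA)\bigr)$. It then remains to check, using $S,A \ge 2$, that this expression is at most $SA\log(T)$, which is the routine bookkeeping step.

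I expect the main obstacle to be the charging/doubling step rather than the final arithmetic: one must verify carefully that the reference count against which ``doubling'' is measured at a given charged boundary is at least the count recorded at the previously charged boundary, which relies on the monotonicity of $N_t(s,a)$ together with the ordering $k_{j-1} \le k_j - 1$ of episode indices. Handling the base case, where a pair is first visited and the preceding count may be $0$, also needs a small amount of care to justify $m_1 \ge 1$ and hence the geometric lower bound $m_L > 2^{L-1}$.
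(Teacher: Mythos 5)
Your proposal is correct and takes essentially the same route as the paper's proof: you charge each macro-episode boundary to a state-action pair whose count has doubled, bound the number of charges per pair by $1+\log_2 N_{T+1}(s,a)$ via the iterated doubling $m_j > 2m_{j-1}$ (the paper gets the analogous per-pair bound by contradiction through a telescoping product of count ratios), and then sum over pairs using concavity of the logarithm with $\sum_{(s,a)} N_{T+1}(s,a)=T$. The only differences are cosmetic — your explicit one-pair tie-breaking and the extra additive $SA$ slack, which your final arithmetic check correctly absorbs using $S,A\geq 2$.
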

\begin{proof}
Since the second stopping criterion is triggered whenever the number of visits to a state-action pair is doubled, the start times of macro episodes can be expressed as 
\begin{align*}
\{t_1\}\cup \Big(\cup_{(s,a)\in\mathcal S \times \mathcal A} \{t_k: k \in\mathcal M_{(s,a)} \}\Big)
\end{align*}
where
\begin{align*}
&\mathcal M_{(s,a)} = 
\{k \leq K_T:  N_{t_k}(s,a) > 2N_{t_{k-1}}(s,a)\}.
\end{align*}
Since the number of visits to $(s,a)$ is doubled at every $t_k$ such that $k\in\mathcal M_{(s,a)}$, the size of $\mathcal M_{(s,a)}$ should not be larger than $O(\log(T))$. This argument is made rigorous as follows.

If $|\mathcal M_{(s,a)}| \geq \log(N_{T+1}(s,a))+1$ we have
\begin{align*}
N_{t_{K_T}}(s,a) = \prod_{k \leq K_T, N_{t_{k-1}}(s,a)\geq 1}\frac{N_{t_{k}}(s,a)}{N_{t_{k-1}}(s,a)}  
> \prod_{k \in \mathcal M_{(s,a)}, N_{t_{k-1}}(s,a)\geq 1} \hspace{-2em} 2 \hspace{2em}
\geq N_{T+1}(s,a).
\end{align*}
But this contradicts the fact that $N_{t_{K_T}}(s,a) \leq N_{T+1}(s,a)$. Therefore, 
$|\mathcal M_{(s,a)}| \leq \log(N_{T+1}(s,a))$ for all $(s,a)$.
From this property we obtain a bound on the number of macro episodes as
\begin{align}
M \leq& 1+\sum_{(s,a)}|\mathcal M_{(s,a)}| 
\leq 1 + \sum_{(s,a)}\log(N_{T+1}(s,a))
\notag\\
\leq &1 + SA \log(\sum_{(s,a)}N_{T+1}(s,a)/SA)
= 1 + SA \log(T/SA) \leq SA \log(T)
\label{eq:Mbound}
\end{align}
where the first inequality is the union bound and the third inequality holds because $\log$ is concave.
\end{proof}

\section{Proof of Lemma \ref{lm:R1}}
\begin{proof}
We have
\begin{align*}
R_{1} 
= &
\ee\Big[\sum_{k=1}^{K_T}\sum_{t=t_k}^{t_{k+1}-1}  \Big[v(s_t,\theta_k) - v(s_{t+1},\theta_k)\Big]\Big]
\notag\\
=&\ee\Big[\sum_{k=1}^{K_T}  \Big[v(s_{t_k},\theta_k) - v(s_{t_{k+1}},\theta_k)\Big]\Big]
\leq \ee\Big[H K_T\Big]
\end{align*}
where the last equality holds because $0 \leq v(s,\theta) \leq sp(\theta) \leq H$ for all $s,\theta$ from Assumption \ref{assum:WASP}.
\end{proof}

\section{Proof of Lemma \ref{lm:R2}}
\begin{proof}

For notational simplicity, we use $z = (s,a) \in \mathcal S \times \mathcal A$ and $z_t = (s_t,a_t)$ to denote the corresponding state-action pair. Then
\begin{align*}
R_2 = 
& \ee\Big[\sum_{k=1}^{K_T}\sum_{t=t_k}^{t_{k+1}-1}  \Big[ v(s_{t+1},\theta_k) -\sum_{s' \in \mathcal S}\theta_k(s'|z_t)v(s',\theta_k)\Big]\Big]
\notag\\
=& \ee\Big[\sum_{k=1}^{K_T}\sum_{t=t_k}^{t_{k+1}-1}  \Big[ \sum_{s' \in \mathcal S}(\theta_*(s'|z_t)-\theta_k(s'|z_t))v(s',\theta_k)\Big]\Big].
%\notag\\
%&\sum_{t=0}^T \ee \Big[ v(s_{t+1},\theta_t) -\sum_{s' \in \mathcal S}\theta_t(s|z_t)v(s',\theta_t)\Big]
%\notag\\
%= &\sum_{t=0}^T \ee \Big[ \sum_{s' \in \mathcal S} (\theta_*(s'|z_t)-\theta_t(s'|z_t))v(s',\theta_t)\Big].
\end{align*}
Since $0\leq v(s',\theta_t) \leq H$ from Assumption \ref{assum:WASP}, each term in the inner summation is bounded by
\begin{align*}
&\sum_{s' \in \mathcal S} (\theta_*(s'|z_t)-\theta_k(s'|z_t))v(s',\theta_k)
\notag\\
\leq & H\sum_{s' \in \mathcal S} |\theta_*(s'|z_t)-\theta_k(s'|z_t)|
\notag\\
\leq & H\sum_{s' \in \mathcal S} |\theta_*(s'|z_t)-\hat\theta_{k}(s'|z_t)|
+H	\sum_{s' \in \mathcal S} |\theta_{k}(s'|z_t)-\hat\theta_{k}(s'|z_t)|.
\end{align*}
Here $\hat\theta_{k}(s'|z_t) = \frac{N_{t_k}(z_t,s')}{N_{t_k}(z_t)}$ is the empirical mean for the transition probability at the beginning of episode $k$
where $N_{t_k}(s_t,a_t,s') = |\{\tau<t_k: (s_\tau,a_\tau,s_{\tau+1}) = (s_t,a_t,s')\}|$.

Define confidence set
\begin{align}
&B_k = \{\theta: \sum_{s' \in \mathcal S} |\theta(s'|z)-\hat\theta_{k}(s'|z)|\leq \beta_k(z) \,\forall \,z \in \mathcal S\times \mathcal A \}
\label{eq:Bt}
\end{align}
where $\beta_k(z) = \sqrt{\frac{14S\log(2At_k T)}{\max(1,N_{t_k}(z))}}$. Note that $\beta_k(z)$ is the confidence set used in \cite{jaksch2010near} with $\delta = 1/T$.
Then we have 
\begin{align*}
&\sum_{s' \in \mathcal S} |\theta_*(s'|z_t)-\hat\theta_{k}(s'|z_t)|
+ \sum_{s' \in \mathcal S} |\theta_{k}(s'|z_t)-\hat\theta_{k}(s'|z_t)|
\notag\\
\leq & 2\beta_{k}(z_t) + 2 (\mathds{1}_{\{\theta_* \notin B_{k}\}}+\mathds{1}_{\{\theta_{k} \notin B_{k}\}}).
\end{align*}
Therefore,
\begin{align}
R_2 \leq &
2H\ee\Big[\sum_{k=1}^{K_T}\sum_{t=t_k}^{t_{k+1}-1}  \beta_{k}(z_t)\Big] 
+ 2H\ee\Big[\sum_{k=1}^{K_T}T_k(\mathds{1}_{\{\theta_* \in B_k\}}+\mathds{1}_{\{\theta_{k} \in B_k\}}) \Big].
\label{eq:R2decompose}
\end{align}

For the first term in \eqref{eq:R2decompose} we have
\begin{align*}
 &\sum_{k=1}^{K_T}\sum_{t=t_k}^{t_{k+1}-1}  \beta_{k}(z_t)
=\sum_{k=1}^{K_T}\sum_{t=t_k}^{t_{k+1}-1} \sqrt{\frac{14S\log(2At_k T)}{\max(1,N_{t_k}(z_t))}}.
\end{align*}
Note that $N_{t}(z_t) \leq 2N_{t_k}(z_t)$ for all $t$ in the $k$th episodes from the second criterion. So we get
\begin{align}
\sum_{k=1}^{K_T}\sum_{t=t_k}^{t_{k+1}-1}  \beta_{k}(z_t)
\leq& \sum_{k=1}^{K_T}\sum_{t=t_k}^{t_{k+1}-1} \sqrt{\frac{28S\log(2At_k T)}{\max(1,N_{t}(z_t))}}
\notag\\
\leq &\sum_{k=1}^{K_T}\sum_{t=t_k}^{t_{k+1}-1} \sqrt{\frac{28S\log(2A T^2)}{\max(1,N_{t}(z_t))}}
\notag\\
= &\sum_{t=1}^{T} \sqrt{\frac{28S\log(2A T^2)}{\max(1,N_{t}(z_t))}}
\notag\\
\leq &\sqrt{56 S\log(AT)}\sum_{t=1}^T \frac{1}{\sqrt{\max(1,N_{t}(z_t))}}.
\label{eq:forbeta1}
\end{align}
Since $N_{t}(z_t)$ is the count of visits to $z_t$, we have
\begin{align*}
& \sum_{t=1}^T \frac{1}{\sqrt{\max(1,N_{t}(z_t))}} 
= \sum_z \sum_{t=1}^T \frac{\mathds{1}_{\{z_t = z\}}}{\sqrt{\max(1,N_{t}(z))}} 
\notag\\
= & \sum_z \Big(\mathds{1}_{\{N_{T+1}(z)>0\}}+\sum_{j=1}^{N_{T+1}(z)-1} \frac{1}{\sqrt{j}}\Big)
\notag\\
\leq &\sum_z \Big(\mathds{1}_{\{N_{T+1}(z)>0\}}+2\sqrt{N_{T+1}(z)}\Big)
\leq 3\sum_z \sqrt{N_{T+1}(z)}.
\end{align*}
Since $\sum_z N_{T+1}(z) = T$, we have
\begin{align}
3\sum_z \sqrt{N_{T+1}(z)}
\leq & 3\sqrt{SA \sum_z N_{T+1}(z)}
= 3\sqrt{SAT}.
\label{eq:forbeta2}
\end{align}
Combining \eqref{eq:forbeta1}-\eqref{eq:forbeta2} we get
\begin{align}
 &2H\sum_{k=1}^{K_T}\sum_{t=t_k}^{t_{k+1}-1}  \beta_{k}(z_t) \leq 6\sqrt{56}H S\sqrt{AT\log(AT)}
 \leq 48 HS\sqrt{AT\log(AT)}.
 \label{eq:R2part1}
\end{align}

Let's now work on the second term in \eqref{eq:R2decompose}.
Since $T_k \leq T$ for all $k$, we have
\begin{align}
\ee\Big[\sum_{k=1}^{K_T}T_k(\mathds{1}_{\{\theta_* \notin B_k\}}+\mathds{1}_{\{\theta_{k} \notin B_k\}}) \Big]
\leq &T\ee\Big[\sum_{k=1}^{K_T}(\mathds{1}_{\{\theta_* \notin B_k\}}+\mathds{1}_{\{\theta_{k} \notin B_k\}}) \Big]
\notag\\
\leq &T\sum_{k=1}^\infty \ee\Big[
\mathds{1}_{\{\theta_* \notin B_{k}\}}+\mathds{1}_{\{\theta_{k} \notin B_{k}\}}\Big].
\label{eq:forBprob1}
\end{align}
%Since $T_k\leq T_{k-1}+1$ for all $k$, we must have $T_k \leq \sqrt{2T}$ for all $t \leq K_T$; otherwise if $T_k > \sqrt{2T}$ for some $k$, we get
%\begin{align*}
%T \geq t_{k+1}-1 = \sum_{i\leq k} T_i \geq \sum_{j=1}^{T_k} j > T.
%\end{align*}
%Hence, $T_k \leq \sqrt{2T}$ and
%\begin{align}
%&\ee\Big[\sum_{k=1}^{K_T}T_k(\mathds{1}_{\{\theta_* \in B_k\}}+\mathds{1}_{\{\theta_{k} \in B_k\}}) \Big]
%\notag\\
%\leq &\sqrt{2T}\ee\Big[\sum_{k=1}^{K_T}(\mathds{1}_{\{\theta_* \in B_k\}}+\mathds{1}_{\{\theta_{k} \in B_k\}}) \Big]
%\notag\\
%%= &\sum_{k=1}^\infty \ee\Big[\mathds{1}_{\{t_{k+1}\leq T\}}(t_{k+1}-t_k)
%%(\mathds{1}_{\{\theta_* \notin B_{t_k}\}}+\mathds{1}_{\{\theta_{t_k} \notin B_{t_k}\}})\Big]
%%\notag\\
%\leq &\sqrt{2T}\sum_{k=1}^\infty \ee\Big[
%\mathds{1}_{\{\theta_* \notin B_{k}\}}+\mathds{1}_{\{\theta_{k} \notin B_{k}\}}\Big]
%\label{eq:forBprob1}
%\end{align}
Since $B_{k}$ is measurable with respect to $\sigma(h_{t_k})$, using Lemma \ref{lm:PS_expectation} we get
\begin{align}
\ee\Big[
\mathds{1}_{\{\theta_* \notin B_{k}\}}+\mathds{1}_{\{\theta_{k} \notin B_{k}\}}\Big]
= & 2\ee\Big[\mathds{1}_{\{\theta_* \notin B_{k}\}}\Big]
= 2 \prob(\theta_* \notin B_{k})
\label{eq:forBprob2}
\end{align}
By the definition of $B_{k}$ in \eqref{eq:Bt}, \cite[Lemma 17, setting $\delta = 1/T$]{jaksch2010near} implies that
\begin{align}
&\prob(\theta_* \notin B_{k}) \leq \frac{1}{15Tt_k^6}.
\label{eq:forBprob3}
\end{align}
Combining \eqref{eq:forBprob1}, \eqref{eq:forBprob2} and \eqref{eq:forBprob3} we obtain
\begin{align}
2H\ee\Big[\sum_{k=1}^{K_T}T_k(\mathds{1}_{\{\theta_* \notin B_k\}}+\mathds{1}_{\{\theta_{k} \notin B_k\}}) \Big]
\leq &\frac{4}{15}H\sum_{k=1}^\infty  t_k^{-6}
\leq \frac{4}{15}H\sum_{k=1}^\infty  k^{-6}
 \leq H.
\label{eq:R2part2}
\end{align}

The statement of the lemma then follows by substituting \eqref{eq:R2part1} and \eqref{eq:R2part2} into \eqref{eq:R2decompose}.
\end{proof}

\section{Proof of Theorem \ref{thm:regretbound_approximate}}
\begin{proof}
Since $\tilde \pi_k$ is an $\epsilon_k-$approximation policy, we have
\begin{align*}
c(s_t,a_t) 
\leq &
\min_{a \in \mathcal A}\Big\{c(s_t,a) + \sum_{s' \in \mathcal S}\theta_k(s'|s_t,a)v(s',\theta_k)\Big\}
- \sum_{s' \in \mathcal S}\theta_k(s'|s_t,a_t)v(s',\theta_k)
+\epsilon_k
\notag\\
=& J(\theta_k) +  v(s_t,\theta_k)- \sum_{s' \in \mathcal S}\theta_k(s'|s_t,a_t)v(s',\theta_k) + \epsilon_k.
\end{align*}
Then \eqref{eq:regretbound} becomes
\begin{align*}
& R(T,\text{TSDE})
\notag\\
% = & \sum_{t=1}^T \ee\Big[ c(s_t,a_t)  - J(\theta_*) \Big]
%\notag\\
 = & \ee\Big[ \sum_{k=1}^{K_T}\sum_{t=t_k}^{t_{k+1}-1} c(s_t,a_t)  \Big] - T \ee\Big[J(\theta_*) \Big]
\notag\\
\leq & \ee\Big[ \sum_{k=1}^{K_T}T_k J(\theta_k) \Big]
+
\ee\Big[\sum_{k=1}^{K_T}\sum_{t=t_k}^{t_{k+1}-1}  \Big[ v(s_t,\theta_k) -\sum_{s' \in \mathcal S}\theta_k(s'|s_t,a_t)v(s',\theta_k)
\Big]\Big]
\notag\\
&+ \ee\Big[ \sum_{k=1}^{K_T}T_k \epsilon_k \Big]- T \ee\Big[J(\theta_*) \Big] 
\notag\\
=& R_0+R_1+R_2 + \ee\Big[ \sum_{k=1}^{K_T}T_k \epsilon_k \Big].
\end{align*}
Since $R_0+R_1+R_2 = \tilde O(HS\sqrt{AT})$ from the proof of Theorem \ref{thm:regretbound}, we obtain the first part of the result.

If $\epsilon_k \leq \frac{1}{k+1}$, since $T_k \leq T_{k-1}+1 \leq ...\leq k+1$, we get
\begin{align*}
\sum_{k=1}^{K_T} T_k\epsilon_k
\leq \sum_{k=1}^{K_T} \frac{k+1}{k+1} = K_T \leq \sqrt{2SAT\log{T}}
\end{align*}
where the last inequality follows from Lemma \ref{lm:boundKT}.
\end{proof}

\end{document}